%% file: main.tex
\PassOptionsToPackage{table}{xcolor}
\documentclass{article} % For LaTeX2e
\usepackage[T1]{fontenc} % Keep the template Times fonts across TeX engines.
\usepackage{iclr2027_conference,times}

\input{math_commands.tex}

\usepackage{amsthm}
\newtheorem{theorem}{Theorem}[section]
\newtheorem{proposition}[theorem]{Proposition}

\usepackage{hyperref}
\usepackage{url}
\usepackage{enumitem}
\usepackage{graphicx}
\usepackage{booktabs}
\usepackage{multirow}
\usepackage{makecell}
\usepackage{algorithm}
\usepackage{algpseudocode}
\usepackage{pifont}
\usepackage{threeparttable}
\usepackage{subcaption}
\usepackage[table]{xcolor}

\usepackage{listings}
\usepackage{tabularx}
\definecolor{promptblue}{RGB}{20,80,180}

\definecolor{blue}{HTML}{003366}
\definecolor{gold}{HTML}{996600}

\usepackage{wrapfig}

\lstdefinestyle{jsoncompact}{
  basicstyle=\ttfamily\footnotesize,
  breaklines=true,
  breakatwhitespace=false,
  columns=fullflexible,
  keepspaces=true,
  showstringspaces=false,
  frame=single,
  framerule=0.2pt,
  xleftmargin=0.5em,
  xrightmargin=0.5em,
  escapeinside={(*@}{@*)}
}

\DeclareRobustCommand{\name}{DoubleFree}

\title{
  \name{}: Exploiting Exploration History for Rollout-Free Multi-Task LLMs Training
}

\author{\resizebox{\dimexpr\textwidth-2\tabcolsep\relax}{!}{Yingping Wang$^{1}$, Min Cai$^{2}$, Chenyu Yuan$^{1}$, Xi Ye$^{2}$, Zhijiang Guo$^{1}$, Jiayi Huang$^{1,*}$}\\[0.5em]
$^{1}$The Hong Kong University of Science and Technology (Guangzhou)\\
$^{2}$University of Alberta\\
{\texttt{hjy@hkust-gz.edu.cn}}}

\iclrfinalcopy
\begin{document}

\maketitle
\begingroup
\renewcommand{\thefootnote}{\fnsymbol{footnote}}
\footnotetext[1]{Corresponding author}
\endgroup
% The style sets an ICLR publication header inside \maketitle.
% Clear it for the preprint without changing the conference style file.
\fancyhead{}

\begin{abstract}
Multi-task post-training can develop domain specialists through independent
reinforcement learning (RL) and consolidate their capabilities through
multi-teacher on-policy distillation (MOPD).
However, standard MOPD generates on-policy student responses and keeps
multiple teachers online throughout consolidation, introducing substantial
system costs.
We view this pipeline as domain-wise exploration followed by cross-domain
exploitation.
Domain RL produces not only the final specialists, but also the trajectories
collected throughout their training.
Existing methods primarily use the former while leaving the latter underutilized.
We introduce \name{}, which reuses these historical trajectories for
rollout-free consolidation and precomputes teacher supervision for
online-teacher-free student training.
To understand why this works, we analyze when updates from historical
trajectories can approximate on-policy distillation updates and how combining
trajectories across exploration stages can better support an evolving student.
\name{} requires only minimal changes to existing post-training frameworks.
Across math, code, and instruction following on Qwen3-4B, Qwen3-8B, and
SmolLM3-3B, \name{} achieves $2.99$-$5.75\times$ consolidation speedups
and reduces GPU-hours by $80.9$-$90.1\%$, while maintaining comparable
multi-domain performance.
\end{abstract}

\input{Sections/00_intro}
\input{Sections/01_problem}
\input{Sections/02_method}
\input{Sections/03_theory}
\input{Sections/04_eval}
\input{Sections/05_relat}
\input{Sections/06_conclu}
% Flush main-text tables before references.
\clearpage
\bibliographystyle{iclr2027_conference}
\bibliography{ref}

\newpage
\appendix
\onecolumn
\input{Sections/08_implementation_appendix}
\input{Sections/09_theory_appendix}
\input{Sections/10_results_appendix}

\end{document}

%% file: math_commands.tex
\usepackage{amsmath,amsfonts,bm}

\def\eqref#1{equation~\ref{#1}}
\def\1{\bm{1}}

\DeclareMathAlphabet{\mathsfit}{\encodingdefault}{\sfdefault}{m}{sl}
\SetMathAlphabet{\mathsfit}{bold}{\encodingdefault}{\sfdefault}{bx}{n}

%% file: Sections/00_intro.tex
\section{Introduction}
\label{sec:intro}

\begin{wrapfigure}{r}{0.46\textwidth}
    \centering
    \includegraphics[width=\linewidth]{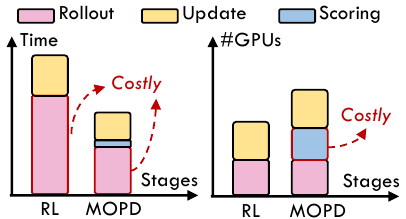}
    \caption{Two system costs in RL-then-MOPD:
    repeated response generation and GPU resources reserved
    for online multi-teacher serving.}
    \label{fig:teaser}
\end{wrapfigure}

Multi-task LLM post-training seeks to combine capabilities such as
mathematical reasoning, code generation, and instruction following
in a single model~\citep{lambert2025tulu3,qwen2025qwen3}.
An increasingly studied approach first develops domain specialists
through independent reinforcement learning (RL), then consolidates
their capabilities through multi-teacher on-policy distillation
(MOPD)~\citep{mopd,interns2}.

This RL-then-MOPD pipeline incurs two major system costs
(Figure~\ref{fig:teaser}).
First, rollout is a major cost in both stages~\cite{rollpacker}, and accounts for about $70\%$ of per-batch time during consolidation.
Yet consolidation generates new student responses despite the domain-specific experience already captured in RL trajectories.
Second, student updates depend on online teacher scoring,
typically requiring dedicated teacher GPUs throughout training.
Specifically, teachers reserve $42.9\%$ of GPUs, while remaining idle for over $90\%$ of their allocated time (Section~\ref{sec:problem}).

We revisit this pipeline through an \emph{exploration-exploitation}
view (Section~\ref{sec:formu}): domain-wise exploration develops
specialist capabilities, while shared exploitation transfers
these capabilities into a single policy.
This view reveals two outputs of domain-wise exploration available for shared exploitation: \textbf{final specialist policies} and \textbf{historical response trajectories}.
Existing distillation methods exploit the former for supervision but largely discard the latter, instead generating additional responses from the current student, initial student, or final specialists~\citep{mopd,wu2026lightningopd,seqkd}.
This presents the opportunity to exploit both outputs to improve
the consolidation efficiency.

We introduce \name{}, which exploits both outputs of domain-wise exploration: it reuses historical trajectories for \textbf{rollout-free} consolidation and pre-scores them with final specialists for \textbf{online-teacher-free} student training.
For policy distillation, the student learns from cached responses
and teacher scores while recomputing its own log-probabilities.
Together, trajectory reuse and teacher pre-scoring eliminate
additional response generation and release teacher GPUs
before student training.
A key question is whether exploration trajectories can support
an evolving shared student.
Collected across different RL substages, these trajectories may jointly
provide broader coverage of student behavior than responses
from either the initial policy or final specialists alone.
In Section~\ref{sec:theory}, we bound the discrepancy between
on-policy and reuse distillation updates in terms of distribution
mismatch and characterize potential coverage benefits of
historical policy mixtures.

Our experiments are conducted on NVIDIA H100-80GB GPUs, with \name{} implemented in verl~\citep{verl}.
Across math, code, and instruction following on Qwen3-4B,
Qwen3-8B, and SmolLM3-3B, \name{} reduces consolidation
GPU-hours by $80.9$-$90.1\%$ and achieves $2.99$-$5.75\times$
speedups over Standard MOPD across \name{}'s SFT and MPD variants,
while retaining comparable multi-domain performance.
Our contributions are as follows:
\begin{itemize}[leftmargin=1.5em, itemsep=0pt, topsep=2pt, parsep=0pt]

\item We formulate RL-then-MOPD as an explore-exploit framework
and identify redundant response generation and persistent
teacher GPU allocation as two system bottlenecks.

\item \name{} combines exploration trajectory reuse with teacher
pre-scoring for rollout-free consolidation and online-teacher-free
student training. Our analysis further characterizes when such reuse
can approximate on-policy updates and how trajectories across
exploration stages can complement each other.

\item Experiments across three models of different scales and families
cover nine benchmarks in math, code, and instruction following.
\name{} substantially reduces consolidation time and GPU-hours
while maintaining comparable multi-domain accuracy.

\end{itemize}

%% file: Sections/01_problem.tex
\section{MOPD System Bottlenecks}
\label{sec:problem}

\begin{figure}[t]
    \centering
    \includegraphics[width=0.9\linewidth]{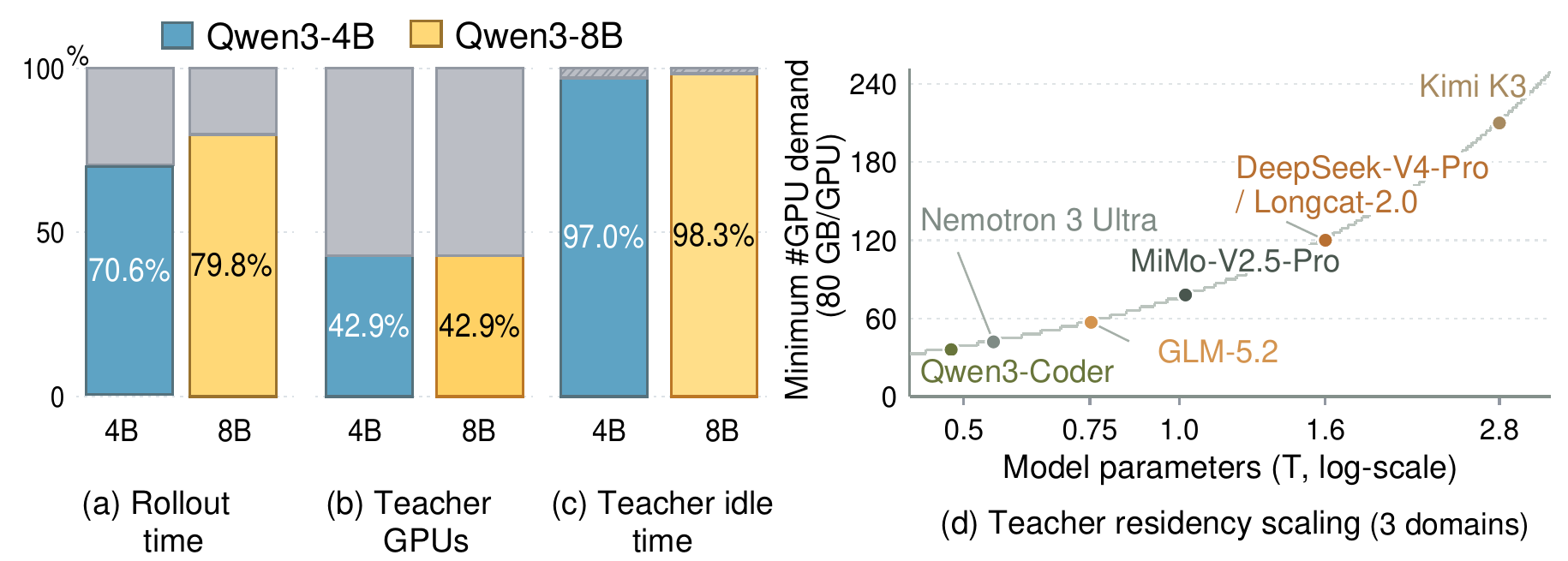}
    \caption{MOPD system costs.
    (a-c) Rollout time, teacher GPU allocation, and teacher idle time
    under two settings.
    (d) Minimum GPU demand for three resident domain teachers
    (BF16 weights). }
    \label{fig:mopd_settings}
\end{figure}

We profile Standard MOPD on Qwen3-series across three domains, with separate GPU pools for student training and teacher serving.
Student/teacher GPU allocations are 4/3 for 4B and 8/6 for 8B model.
Each mixed-domain batch is globally synchronized in our setting.
Figure~\ref{fig:mopd_settings} summarizes rollout time, teacher
allocation and idle time, and teacher GPU demand as model size increases.

\paragraph{Costly rollout across stages.}
Standard MOPD generates fresh responses from the current student
for every update.
As shown in Figure~\ref{fig:mopd_settings}(a), rollout accounts for
$70.6\%$ of per-batch time on Qwen3-4B
($55.02$ of $77.95$ seconds) and $79.8\%$ on Qwen3-8B
($76.06$ of $95.27$ seconds).
This generation cost is paid again during consolidation even though
domain RL has already produced substantial trajectories through
domain-specific exploration.
Offline alternatives move response generation by the initial student
or final specialists ahead of training, but do not eliminate this
additional cost.
Such redundant generation is particularly costly for long-response
reasoning workloads.
These observations raise a key research question:
\emph{Can existing exploration trajectories replace on-policy
rollouts during consolidation while preserving distillation quality?}

\paragraph{Intermittent multi-teacher serving.}
Online teachers provide supervision for trajectories from their
respective domains, and each student update proceeds only after
scoring is complete across all domains.
In a conventional resident-teacher deployment, domain specialists
occupy dedicated GPUs throughout training, accounting for $42.9\%$
of the total GPU allocation in both profiled settings.
Yet request-level timings indicate that over $97\%$ of allocated
teacher GPU-time falls outside scoring calls in both settings
(Figure~\ref{fig:mopd_settings}(b-c)).
Moreover, the resource requirement of teacher residency scales
with model size.
As shown in Figure~\ref{fig:mopd_settings}(d), three domain teachers
require at least $120$ 80\,GB GPUs at $1.6$T parameters per teacher
and $210$ at $2.8$T for their BF16 weights alone.
This raises a complementary question:
\emph{Can teacher supervision be delivered without continuously
reserving substantial GPU resources for intermittent scoring?}

%% file: Sections/02_method.tex
\section{Method: DoubleFree}
\label{sec:method}

\name{} addresses the two bottlenecks by exploiting both outputs of
domain-wise exploration: historical trajectories are reused for
consolidation, while final specialists provide precomputed supervision
(Figure~\ref{fig:overview}).
Trajectory reuse eliminates additional response generation, while
fixed responses allow specialist supervision to be computed offline.
Student training is therefore \emph{rollout-free} and
\emph{online-teacher-free}.

\begin{figure}[htbp]
    \centering
    \includegraphics[width=\linewidth]{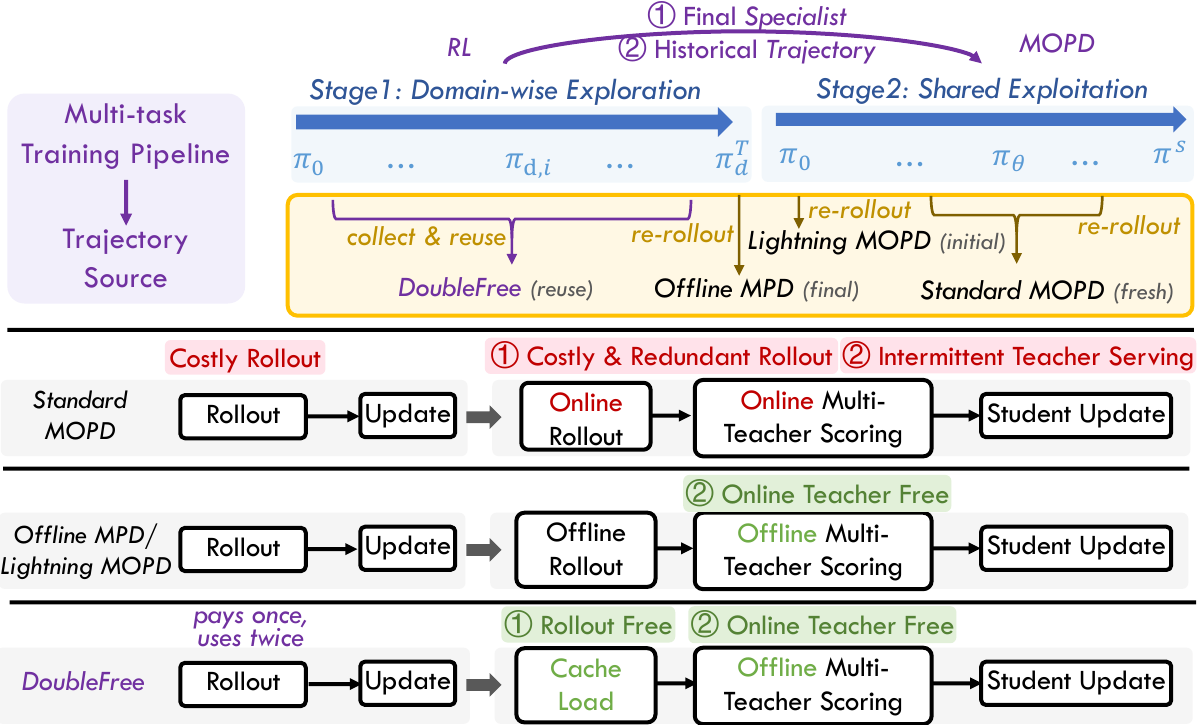}
    \caption{Overview of DoubleFree.
    Exploration stage produces both specialist policies and historical trajectories.
    \name{} reuses the trajectories and precomputes specialist scores,
    removing additional generation and online teacher residency from
    student training.}
    \label{fig:overview}
\end{figure}

\subsection{Explore-Exploit Formulation}
\label{sec:formu}
Let $\pi_0$ be an initial shared policy and
$\{\mathcal D_d,r_d\}_{d=1}^{M}$ a collection of domains, with
prompt distribution $\mathcal D_d$ and verifiable reward $r_d$.
The goal is a shared policy with strong aggregate performance,
\begin{equation}
    \max_\pi J(\pi)=\sum_{d=1}^{M}w_d J_d(\pi),
    \qquad
    J_d(\pi)=\mathbb E_{x\sim\mathcal D_d,\;y\sim\pi(\cdot\mid x)}
    [r_d(x,y)],
    \label{eq:multitask_objective}
\end{equation}
where $w_d\ge0$ and $\sum_d w_d=1$.
We organize RL-then-MOPD into domain-wise exploration and shared
exploitation toward the aggregate objective in
Eq.~\ref{eq:multitask_objective}.

\paragraph{Domain-wise exploration.}
Each explorer is initialized from $\pi_0$ and independently optimized
on $J_d$, producing two outputs: a final specialist $\pi_d^T$ and a
historical trajectory buffer $\mathcal B_d$ of prompt-response pairs
$\tau=(x,y)$ collected throughout RL.
We denote the policies that generated these responses
by $\{\pi_{d,i}\}_{i\in\mathcal I_d}$, where $i$ indexes RL
substages within exploration. 

\paragraph{Shared exploitation.}
A student $\pi_\theta$, initialized from $\pi_0$, integrates the
specialists through multi-domain policy distillation (MPD). For a response
prefix $h_t=(x,y_{<t})$, the update signal is
\begin{equation}
    \log\pi_d^T(y_t\mid h_t)-\log\pi_\theta(y_t\mid h_t).
    \label{eq:distillation_signal}
\end{equation}
The teacher defines the supervision, while the trajectory source
determines the prefixes and tokens on which it acts. 
% We study one exploration phase followed by one exploitation phase, holding the
% final specialists and available RL data fixed across comparisons.

\subsection{Trajectory Reuse for Rollout-Free Consolidation}
\label{sec:trajectory_sources}

\name{} loads complete responses from $\mathcal B_d$ rather than
generating new student rollouts.
The exploration buffer contains responses generated throughout
domain-specific capability development, spanning policies from different RL substages.
With prompts drawn from the same domain
distribution $\mathcal D_d$, the underlying response distribution (\textit{reuse}) is
\begin{equation}
    p_{\mathrm{reuse},d}(y\mid x)=
    \sum_{i\in\mathcal I_d}\alpha_{d,i}\pi_{d,i}(y\mid x),
    \label{eq:reuse_distribution}
\end{equation}
where $\alpha_{d,i}\ge0$ is the mixture weight of historical
policy $\pi_{d,i}$ and $\sum_i\alpha_{d,i}=1$.

Within the framework of Section~\ref{sec:formu},
we express alternative trajectory sources as
\begin{align}
    p_{\mathrm{fresh},d}(y\mid x;\theta)&=\pi_\theta(y\mid x),
    \label{eq:fresh_distribution}\\
    p_{\mathrm{initial},d}(y\mid x)&=\pi_0(y\mid x),
    \label{eq:initial_distribution}\\
    p_{\mathrm{final},d}(y\mid x)&=\pi_d^T(y\mid x).
    \label{eq:final_distribution}
\end{align}
We extend these trajectory sources to multi-domain consolidation, yielding Standard MOPD (\emph{fresh}), Lightning MOPD (\emph{initial}), Offline MPD (\emph{final}), and \name{} (\emph{reuse}), all under the same final-specialist supervision.
Here, reused responses need not be positive demonstrations: the
teacher-student signal can increase or decrease the probability of
their observed tokens. Exploration history supplies contexts visited
during capability development, while the final specialist determines
how the student should learn from them. 
This changes the sampling distribution rather than reproducing on-policy distillation exactly.

\subsection{Pre-scoring: Online-Teacher-Free Training}
\label{sec:teacher_free}

For fixed responses, scores from frozen teachers are independent of the evolving student. We therefore precompute and cache observed-token log-probabilities under teacher forcing,
\begin{equation}
    \mathbf q_d^T(\tau)=
    \bigl(\log\pi_d^T(y_t\mid h_t)\bigr)_{t=1}^{|y|}.
    \label{eq:teacher_scores}
\end{equation}
During student training, cached teacher scores remain fixed while student log-probabilities are recomputed on the reused responses.
Pre-scoring separates teacher and student GPU allocations in time, removing online teacher queries and scoring barriers.
Teachers can also be loaded and released sequentially during preparation, further reducing peak GPU demand.

Exploration typically generates more responses per prompt than
distillation uses, enabling optional trajectory selection.
We construct the scored training buffer as
\begin{equation}
    \widetilde{\mathcal B}_d=
    \operatorname{Select}_{\phi}
    \left\{(\tau,\mathbf q_d^T(\tau)):\tau\in\mathcal B_d\right\}.
    \label{eq:support_selection}
\end{equation}
Here, $\phi$ may retain all responses or select a subset,
for example by random sampling.

\subsection{Shared-Student Optimization}

Our primary instantiation, \name{}-MPD, optimizes the shared student
using final-specialist supervision on reused trajectories.
We follow a PPO-style clipped objective: cached teacher
log-probabilities and recomputed student log-probabilities define
clipped, detached token-level distillation advantages.
Updates are anchored to the pre-update student policy, and the loss
is averaged over valid response tokens.
The full objective is given in Appendix~\ref{app:student_objective}.
Trajectory reuse is also compatible with SFT.
We include \name{}-SFT as a variant that directly minimizes
response-token cross-entropy on reused trajectories, without teacher
scoring.

%% file: Sections/03_theory.tex
\section{Theoretical Analysis of Trajectory Reuse}
\label{sec:theory}

\subsection{Bounded Discrepancy from On-Policy Updates}
\label{sec:update_discrepancy}

The key question is how replacing fresh student rollouts with
historical exploration trajectories changes the distillation update.
To characterize this discrepancy, we fix the student parameters
$\theta$ and final specialists $\pi_d^T$.
In each domain $d$, \emph{fresh} and \emph{reuse} draw prompts from
the same $\mathcal D_d$ but generally induce different response
distributions.
Following the local-surrogate perspective of
\citet{wu2026lightningopd}, we compare unnormalized population
update directions at the PPO student anchor.

Let $f_d(x,y;\theta)$ be a response's update contribution in domain $d$ and
$g_b(\theta)$ the expected update under source
$b\in\{\mathrm{fresh},\mathrm{reuse}\}$:
\begin{equation}
\begin{aligned}
    f_d(x,y;\theta)
    &= \sum_{t=1}^{|y|}
    A_{d,t}(x,y;\theta)\nabla_\theta
    \log\pi_\theta(y_t\mid h_t),\\
    g_b(\theta)
    &= \sum_d w_d
    \mathbb E_{x\sim\mathcal D_d,\,
    y\sim p_{b,d}(\cdot\mid x)}
    [f_d(x,y;\theta)].
\end{aligned}
\label{eq:exploitation_updates}
\end{equation}
Here, $|y|$ is the response length, $y_t$ its $t$-th token,
and $h_t=(x,y_{<t})$ its prefix.
The advantage $A_{d,t}$ is the teacher-student log-probability
difference in Eq.~\ref{eq:distillation_signal}, clipped to
$[-c,c]$ for $c>0$ and held constant during differentiation.
The domain weights satisfy $w_d\ge0$ and $\sum_d w_d=1$.

\begin{theorem}[Bounded update discrepancy]
\label{thm:reuse_discrepancy}
At a fixed student state $\theta$, assume for each domain $d$ that
$p_{\mathrm{fresh},d}(\cdot\mid x;\theta)
\ll p_{\mathrm{reuse},d}(\cdot\mid x)$
for every prompt $x$ with $\mathcal D_d(x)>0$.
(Here, $\ll$ denotes absolute continuity: every response with
positive fresh probability also has positive reuse probability.)
Let $B_d\ge0$ be a finite bound on the root-mean-square
update contribution,
\[
    \mathbb E_{x\sim\mathcal D_d,\,
    y\sim p_{\mathrm{reuse},d}(\cdot\mid x)}
    \|f_d(x,y;\theta)\|_2^2
    \le B_d^2,
\]
where $\|\cdot\|_2$ is the Euclidean norm.
Assume the nonnegative distribution mismatch $\delta_d(\theta)$,
defined by
\begin{equation}
    \delta_d^2(\theta)
    =
    \mathbb E_{x\sim\mathcal D_d}
    \chi^2\!\left(
    p_{\mathrm{fresh},d}(\cdot\mid x;\theta)
    \,\middle\|\,
    p_{\mathrm{reuse},d}(\cdot\mid x)
    \right),
    \label{eq:reuse_mismatch}
\end{equation}
is finite. For response distributions $p\ll q$,
$\chi^2(p\|q)=\sum_{y:q(y)>0}(p(y)-q(y))^2/q(y)$.
Then
\begin{equation}
    \|g_{\mathrm{reuse}}(\theta)
    -g_{\mathrm{fresh}}(\theta)\|_2
    \le
    \sum_d w_d B_d\delta_d(\theta)
    \equiv \Delta(\theta),
    \label{eq:reuse_discrepancy_bound}
\end{equation}
where $\Delta(\theta)$ is the aggregate update-discrepancy bound.
\end{theorem}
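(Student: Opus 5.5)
The plan is a change of measure from the fresh to the reuse distribution, followed by Cauchy--Schwarz within each domain and the triangle inequality across domains. Fix $\theta$ and a domain $d$, and write $p=p_{\mathrm{fresh},d}(\cdot\mid x;\theta)$ and $q=p_{\mathrm{reuse},d}(\cdot\mid x)$ for a prompt $x$ with $\mathcal D_d(x)>0$.

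\textbf{Step 1: change of measure.} Absolute continuity $p\ll q$ lets us express the fresh expectation on the support of $q$. This gives
\[
\mathbb E_{y\sim q}[f_d]-\mathbb E_{y\sim p}[f_d]
=\sum_{y:q(y)>0}\bigl(q(y)-p(y)\bigr)f_d(x,y;\theta)
=\mathbb E_{y\sim q}\!\left[\Bigl(1-\tfrac{p(y)}{q(y)}\Bigr)f_d(x,y;\theta)\right].
\]
Responses with $q(y)=0$ also have $p(y)=0$, so they contribute nothing. Averaging over $x\sim\mathcal D_d$ gives the per-domain difference $g_{\mathrm{reuse},d}-g_{\mathrm{fresh},d}$. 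This is an expectation over the joint law $(x,y)\sim\mathcal D_d\times q$ of the vector $(1-\rho)f_d$, where $\rho=p/q$.

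\textbf{Step 2: Cauchy--Schwarz on the joint law.} We first use Jensen's inequality (convexity of the norm), $\|\mathbb E[Z]\|_2\le \mathbb E\|Z\|_2$. Then scalar Cauchy--Schwarz gives
\[
\mathbb E\bigl[|1-\rho|\,\|f_d\|_2\bigr]\le \bigl(\mathbb E[(1-\rho)^2]\bigr)^{1/2}\bigl(\mathbb E\|f_d\|_2^2\bigr)^{1/2}.
\]
The second factor is at most $B_d$ by assumption. For the first factor, note that $\mathbb E_{y\sim q}[(1-p/q)^2]=\sum_{q>0}(p-q)^2/q=\chi^2(p\|q)$. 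Averaging over $x$ gives exactly $\delta_d^2(\theta)$. Hence
\[
\|g_{\mathrm{reuse},d}-g_{\mathrm{fresh},d}\|_2\le B_d\delta_d(\theta).
\]

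\textbf{Step 3: combine domains.} By linearity of $g_b$ in the per-domain terms, the triangle inequality, and $w_d\ge0$, we get $\|g_{\mathrm{reuse}}-g_{\mathrm{fresh}}\|_2\le\sum_d w_dB_d\delta_d(\theta)$.

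\textbf{Main obstacle: integrability.} The delicate point is making sure every expectation is well defined, especially $g_{\mathrm{fresh}}$ itself. The fresh expectation of $\|f_d\|_2$ equals $\mathbb E_q[\rho\|f_d\|_2]$. Writing $\rho=1+(\rho-1)$, it is bounded by $\mathbb E_q\|f_d\|_2+\mathbb E_q[|\rho-1|\,\|f_d\|_2]$. Both terms are finite: the first is at most $B_d$, and the second is at most $\delta_dB_d$ by the same Cauchy--Schwarz step. The finiteness assumptions on $B_d$ and $\delta_d$ are used exactly here.

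This absolute integrability justifies rearranging the sums in Step 1, including over countably many responses, and exchanging expectation with norms. The clipping of $A_{d,t}$ and the stop-gradient matter only in that they make $f_d$ a fixed vector-valued function at the given $\theta$. No differentiation through $p$ is needed, because $\theta$ is held fixed.
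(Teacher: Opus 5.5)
Your proposal is correct and is essentially the paper's proof: a change of measure onto the reuse distribution, Cauchy--Schwarz to get $\|g_{\mathrm{fresh},d}-g_{\mathrm{reuse},d}\|_2\le\delta_d B_d$, and the triangle inequality over the $w_d$-weighted domains. The differences are cosmetic. The paper works directly with the joint Radon--Nikodym derivative $W_d=\mathrm dP_d/\mathrm dQ_d$ and uses $\chi^2(P_d\|Q_d)=\delta_d^2$ from the shared prompt marginal, whereas you argue per prompt and then average, and you add an explicit integrability check that the paper leaves implicit.
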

Theorem~\ref{thm:reuse_discrepancy} shows that reuse remains close
to the fresh update when distribution mismatch and update magnitudes
are controlled (proof in Appendix~\ref{app:discrepancy_proof}).
Importantly, the direction of the divergence penalizes responses
that are likely under the current student but unlikely under
\emph{reuse}.
The discrepancy therefore depends on \emph{how well the historical source covers the responses the current student would generate}.

\subsection{Coverage Benefits of Historical Policy Mixtures}
\label{sec:history_coverage}

Historical reuse combines trajectories generated by policies
$\{\pi_{d,i}\}$ across exploration substages.
This mixture can inherit coverage from any retained historical policy, as formalized below.

\begin{proposition}[Coverage inherited from history]
\label{prop:history_coverage}
For each domain $d$, let $\pi_{d,i}$ be a retained historical policy
with mixture weight $\alpha_{d,i}>0$ in Eq.~\ref{eq:reuse_distribution}.
The distribution mismatch defined in Eq.~\ref{eq:reuse_mismatch}
is bounded as follows:
\begin{equation}
    1+\delta_d^2(\theta)
    \le\frac{1+\mathbb E_{x\sim\mathcal D_d}
    \chi^2\!\left(\pi_\theta(\cdot\mid x)
    \,\middle\|\,\pi_{d,i}(\cdot\mid x)\right)}{\alpha_{d,i}}.
    \label{eq:history_coverage_bound}
\end{equation}
\end{proposition}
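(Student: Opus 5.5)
The plan is to rewrite the $\chi^2$ divergence as a second moment and then use the pointwise domination of the reuse mixture by any one of its components. For distributions $p\ll q$ on a countable response space we have
\[
1+\chi^2(p\|q)=\sum_{y:q(y)>0}\frac{p(y)^2}{q(y)},
\]
because expanding $(p-q)^2/q$ gives $p^2/q-2p+q$, and the last two terms sum to $-2+1=-1$. Apply this identity prompt by prompt to $\chi^2\!\left(\pi_\theta(\cdot\mid x)\,\middle\|\,p_{\mathrm{reuse},d}(\cdot\mid x)\right)$, using $p_{\mathrm{fresh},d}=\pi_\theta$ from Eq.~\ref{eq:fresh_distribution}. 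Then take the expectation over $x\sim\mathcal D_d$, using that $\mathcal D_d$ is a probability distribution, to get
\[
1+\delta_d^2(\theta)=\mathbb E_{x\sim\mathcal D_d}\sum_y \frac{\pi_\theta(y\mid x)^2}{p_{\mathrm{reuse},d}(y\mid x)}.
\]

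The key step is the mixture lower bound. Every term in Eq.~\ref{eq:reuse_distribution} is nonnegative, so $p_{\mathrm{reuse},d}(y\mid x)\ge\alpha_{d,i}\pi_{d,i}(y\mid x)$ pointwise. Hence $1/p_{\mathrm{reuse},d}\le 1/(\alpha_{d,i}\pi_{d,i})$ wherever $\pi_{d,i}(y\mid x)>0$. Substituting gives
\[
\sum_y \frac{\pi_\theta(y\mid x)^2}{p_{\mathrm{reuse},d}(y\mid x)}
\le \frac{1}{\alpha_{d,i}}\sum_y\frac{\pi_\theta(y\mid x)^2}{\pi_{d,i}(y\mid x)}
=\frac{1+\chi^2\!\left(\pi_\theta(\cdot\mid x)\,\middle\|\,\pi_{d,i}(\cdot\mid x)\right)}{\alpha_{d,i}}.
\]
Taking $\mathbb E_{x\sim\mathcal D_d}$ of both sides and using linearity yields Eq.~\ref{eq:history_coverage_bound}. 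The constant $1$ passes through the expectation, and dividing by $\alpha_{d,i}$ commutes with it.

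The main obstacle is the support and absolute-continuity bookkeeping. The statement implicitly uses the convention that $\chi^2(p\|q)=+\infty$ when $p\not\ll q$. I would handle this by cases.
\begin{itemize}
\item If $\pi_\theta(\cdot\mid x)\not\ll\pi_{d,i}(\cdot\mid x)$ on a set of prompts with positive $\mathcal D_d$-mass, the right-hand side is $+\infty$ and the bound holds trivially.
\item Otherwise, $\pi_\theta\ll\pi_{d,i}$ almost surely in $x$. Since $\alpha_{d,i}>0$ and $\pi_{d,i}\le p_{\mathrm{reuse},d}/\alpha_{d,i}$, this gives $\pi_\theta\ll p_{\mathrm{reuse},d}$. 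So $\delta_d^2$ is well defined, and restricting all sums to $\{y:\pi_\theta(y\mid x)>0\}$ makes the termwise comparison valid.
\end{itemize}
A convex-combination or Jensen argument over all components is unnecessary; single-component domination suffices. It also explains the interpretation that the reuse mixture inherits, up to a factor $1/\alpha_{d,i}$, the coverage of its best-covering retained policy.
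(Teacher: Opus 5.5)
Your proposal is correct and follows the paper's proof essentially step for step. Both use the second-moment form $1+\chi^2(p\|q)=\sum_y p(y)^2/q(y)$, the pointwise domination $p_{\mathrm{reuse},d}\ge\alpha_{d,i}\pi_{d,i}$, and then averaging over $x\sim\mathcal D_d$; your explicit absolute-continuity case split only spells out what the paper compresses into its remark that zero-over-zero terms vanish and that the bound is vacuous when the right-hand divergence is infinite.
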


Because \emph{reuse} assigns each response at least an
$\alpha_{d,i}$ fraction of its probability under $\pi_{d,i}$,
good coverage by any sufficiently weighted historical policy
transfers to the reuse mixture (Appendix~\ref{app:coverage_proof}).
More importantly, historical policies may provide complementary
coverage.
Responses rarely generated by either endpoint may be more common
at intermediate substages, for example due to differences in reward, teacher-student gap, or response length.
Their mixture can therefore cover student behavior that no single
historical policy covers well (Appendix~\ref{app:coverage_example}).

\begin{center}
\begingroup
\setlength{\fboxsep}{7pt}
\setlength{\fboxrule}{0.3pt}
\fcolorbox{black!25}{black!3}{%
\begin{minipage}{\dimexpr\linewidth-2\fboxsep-2\fboxrule\relax}
\textbf{Takeaway.}
Reusing exploration trajectories stays close to on-policy
distillation as long as the history covers what the student
would generate. 
Historical mixtures are naturally suited to
provide this coverage, as they combine complementary responses
from every exploration stage.
\end{minipage}%
}
\endgroup
\end{center}

%% file: Sections/04_eval.tex
\section{Experiments}
\label{sec:exp}

\subsection{Experimental Setup}
\label{subsec:setup}

\paragraph{Models and datasets.}
We evaluate Qwen3-4B and Qwen3-8B~\citep{qwen2025qwen3}, as well as
SmolLM3-3B~\citep{bakouch2025smollm3}, in non-thinking mode
across math, code, and instruction following. For domain RL, we sample
17K examples each from DAPO-Math-17k~\citep{dapo2025},
DeepCoder-Preview-Dataset~\citep{deepcoder2025}, and
Nemotron-RL-instruction\_following~\citep{nvidia2025nemotronif}, respectively.
Evaluation spans nine benchmarks across the three domains,
with benchmark details and protocols in
Appendix~\ref{app:benchmark_protocol}.

\paragraph{Baselines.}
We compare \name{}-MPD with Standard MOPD,
Offline MPD, and Lightning MOPD, which use different trajectory
sources under the same final-specialist supervision.
Mixed SFT and \name{}-SFT provide response-token supervision baselines.
We focus comprehensive
baseline comparisons and ablations on Qwen3-4B.
On Qwen3-8B and SmolLM3-3B, we evaluate both \name{} variants
against Standard MOPD to assess cross-model generalization.
For the main accuracy comparisons, we evaluate \name{}-MPD over
three random seeds on each model, while using the same fixed seed
for all other methods.
Table~\ref{tab:consolidation_config} summarizes the method configurations.

\paragraph{Training protocol.}
Within each model, methods share the student initialization and use
equal numbers of training prompts. We implement all
methods in verl~\citep{verl} with vLLM~\citep{vllm2023} on a testbed
with 16 NVIDIA H100-80GB GPUs.
For RL, actor training and rollout generation are colocated on the same GPU group.
Standard MOPD uses separate student and teacher GPU pools.
For example, four GPUs support student training and rollouts,
while each of the three domain teachers occupies one dedicated GPU.
More details in Appendix~\ref{app:training_details}.

\begin{table*}[t]
\centering
\caption{Comparison of multi-task training methods.}
\label{tab:consolidation_config}

\small
\setlength{\tabcolsep}{4pt}
\renewcommand{\arraystretch}{1.2}

\resizebox{0.9\linewidth}{!}{%
\begin{tabular}{@{}llcccl@{}}
\toprule
\textbf{Method}
& \shortstack{\textbf{Trajectory}\\\textbf{Source}}
& \shortstack{\textbf{Additional}\\\textbf{Rollout}}
& \shortstack{\textbf{Consolidation}\\\textbf{objective}}
& \shortstack{\textbf{Online}\\\textbf{Teacher}}
& \shortstack{\textbf{Supervision}\\\textbf{Source}} \\
\midrule

Standard MOPD
& Current student
& Online
& PD
& Yes
& Final specialist \\

Mixed SFT
& Final specialist
& Offline
& SFT
& No
& Generated response tokens \\

Offline MPD
& Final specialist
& Offline
& PD
& No
& Final specialist \\

Lightning MOPD
& Initial student
& Offline
& PD
& No
& Final specialist \\

\midrule

\name{}-SFT
& RL training policies
& None
& SFT
& No
& RL response tokens \\

\textbf{\name{}-MPD}
& RL training policies
& None
& PD
& No
& Final specialist \\

\bottomrule
\end{tabular}%
}
\end{table*}

\paragraph{Metrics.}
\emph{GPU-hours} measure allocated resource cost, summing
each phase's duration multiplied by its allocated GPU count.
\emph{Time} reports consolidation duration in hours, combining
measured training time and offline preparation time.
\emph{Speedup} is Standard MOPD's Time divided by the method's \emph{Time} for the same model.
For accuracy preservation, we report math accuracy, code pass@1, and strict prompt-level accuracy for instruction following. 
AVG equally weights within-domain benchmarks, and All AVG across the three domains.

\subsection{Training Efficiency}
\label{subsec:eff}

\begin{table*}[htbp]
\centering
\caption{Training efficiency across three models.
Teacher costs include offline pre-scoring or online teacher
residency, including idle time.
Cache load for offline methods includes cached trajectory access and associated batch-processing
overhead.}
\label{tab:mopd_efficiency}
\small
\setlength{\tabcolsep}{4pt}
\renewcommand{\arraystretch}{1.15}
\resizebox{0.9\linewidth}{!}{%
\begin{tabular}{@{}l*{9}{r}@{}}
\toprule
\multirow{2}{*}{\textbf{Method}}
& \multirow{2}{*}{\textbf{GPUs}}
& \multicolumn{6}{c}{\textbf{GPU-hours} $\downarrow$}
& \multirow{2}{*}{\shortstack{\textbf{Time}\\(h) $\downarrow$}}
& \multirow{2}{*}{\shortstack{\textbf{Speedup}\\$\uparrow$}} \\
\cmidrule(lr){3-8}
& & \shortstack{Offline/Online\\Rollout}
& \shortstack{Offline/Online\\Teacher}
& Cache load & Update & Others & \textbf{Total}
& & \\
\midrule
\rowcolor{gray!12}
\multicolumn{10}{c}{\textit{\textbf{Qwen3-4B}}} \\
\midrule
Math RL
& 8 & 51.37 & -- & -- & 75.70 & 2.05
& 129.13 & 16.14 & -- \\
Code RL
& 8 & 34.62 & -- & -- & 35.05 & 2.21
& 71.88 & 8.98 & -- \\
IF RL
& 8 & 17.35 & -- & -- & 6.31 & 1.95
& 25.61 & 3.20 & -- \\
\midrule
Mixed SFT
& 4 & 24.10 & -- & 0.63 & 10.56 & 2.31
& 37.60 & 9.44 & $1.40\times$ \\
Offline MPD
& 4 & 24.10 & 1.73 & 0.60 & 13.36 & 2.34
& 42.13 & 10.53 & $1.25\times$ \\
Lightning MOPD
& 4 & 9.22 & 1.34 & 0.68 & 6.14 & 2.47
& 19.85 & 4.96 & $2.66\times$ \\
Standard MOPD
& 4+3 & 36.68 & 39.56 & -- & 13.49 & 2.58
& 92.31 & 13.19 & $1.00\times$ \\
\midrule
\name{}-SFT
& 4 & -- & -- & 0.62 & 10.20 & 2.17
& 12.99 & 3.25 & $4.06\times$ \\
\name{}-MPD
& 4 & -- & 1.34 & 1.01 & 12.64 & 2.63
& 17.61 & 4.40 & $2.99\times$ \\
\midrule
\rowcolor{gray!12}
\multicolumn{10}{c}{\textit{\textbf{Qwen3-8B}}} \\
\midrule
Standard MOPD
& 8+6 & 101.41 & 96.71 & -- & 19.63 & 7.90
& 225.65 & 16.12 & $1.00\times$ \\
\midrule
\name{}-SFT
& 8 & -- & -- & 1.54 & 14.24 & 6.65
& 22.43 & 2.80 & $5.75\times$ \\
\name{}-MPD
& 8 & -- & 1.57 & 2.40 & 16.88 & 7.65
& 28.50 & 3.56 & $4.52\times$ \\
\midrule
\rowcolor{gray!12}
\multicolumn{10}{c}{\textit{\textbf{SmolLM3-3B}}} \\
\midrule
Standard MOPD
& 4+3 & 28.16 & 26.39 & -- & 5.57 & 1.45
& 61.58 & 8.80 & $1.00\times$ \\
\midrule
\name{}-SFT
& 4 & -- & -- & 0.53 & 4.25 & 1.78
& 6.56 & 1.67 & $5.26\times$ \\
\name{}-MPD
& 4 & -- & 1.19 & 0.77 & 5.40 & 2.33
& 9.70 & 2.43 & $3.63\times$ \\
\bottomrule
\end{tabular}%
}
\end{table*}

Table~\ref{tab:mopd_efficiency} shows that \name{} reduces consolidation
GPU-hours by $80.9$-$90.1\%$ and achieves $2.99$-$5.75\times$ speedups
across three models and both variants.
For example, on Qwen3-4B, \name{}-MPD reduces GPU-hours from
$92.31$ to $17.61$ and consolidation time from $13.19$ to $4.40$
hours.
Resource savings exceed time savings because \name{} both
shortens consolidation and eliminates dedicated teacher GPU
reservations, including idle periods.
The benefits extend to Qwen3-8B and SmolLM3-3B, where the MPD
variant achieves $4.52\times$ and $3.63\times$ speedups, respectively.

The cost breakdown for Qwen3-4B illustrates where these savings arise.
Domain RL already incurs $103.34$ rollout GPU-hours, yet
Standard MOPD adds $36.68$ student rollout GPU-hours and
$39.56$ teacher-residency GPU-hours, including idle time.
Offline MPD and Lightning MOPD avoid online teacher residency
but still require $24.10$ and $9.22$ generation GPU-hours,
respectively.
Here, Lightning benefits from shorter initial-student responses.
\name{} eliminates this additional overhead, requiring only
offline scoring for MPD and no teacher scoring for SFT.
Its higher cache-loading overhead reflects loading and validating
full candidate shards before optional selection, but remains small relative to the generation and residency costs avoided.

\subsection{Multi-Task Accuracy}
\label{subsec:acc}

\begin{table*}[htbp]
\centering
\caption{Multi-task accuracy (\%).
Bold marks the best consolidated student per model.
\name{}-MPD reports mean $\pm$ sample standard deviation over
three seeds.}
\label{tab:mopd_main}

\footnotesize
\setlength{\tabcolsep}{3.5pt}
\renewcommand{\arraystretch}{1.15}

\resizebox{\linewidth}{!}{%
\begin{tabular}{@{}l*{13}{r}@{}}
    \toprule
    \multirow{2}{*}{\textbf{Method}}
    & \multicolumn{5}{c}{\textbf{Math}}
    & \multicolumn{4}{c}{\textbf{Code}}
    & \multicolumn{3}{c}{\textbf{Instruction Following}}
    & \multirow{2}{*}{\shortstack{\textbf{All}\\\textbf{AVG}}} \\
    \cmidrule(lr){2-6}
    \cmidrule(lr){7-10}
    \cmidrule(lr){11-13}
    & \shortstack{AIME24\\mean@32}
    & \shortstack{AIME25\\mean@32}
    & \shortstack{AMC23\\mean@32}
    & MATH500
    & \textbf{AVG}
    & \shortstack{LCB v6\\pass@1}
    & \shortstack{BCB\\pass@1}
    & \shortstack{HumanEval+\\pass@1}
    & \textbf{AVG}
    & IFBench
    & IFEval
    & \textbf{AVG}
    & \\
    \midrule
    \rowcolor{gray!12}
    \multicolumn{14}{c}{\textit{\textbf{Qwen3-4B}}} \\
    \midrule

    Base
    & 21.46 & 20.31 & 65.55 & 80.20 & 46.88
    & 26.07 & 8.11 & 76.22 & 36.80
    & 26.33 & 79.85 & 53.09
    & 45.59 \\

    Final Specialists
    & 54.69 & 54.48 & 92.27 & 90.40 & 72.96
    & 31.79 & 11.82 & 81.71 & 41.77
    & 40.00 & 83.92 & 61.96
    & 58.90 \\

    \midrule

    Mixed SFT
    & 54.69 & 52.71 & \textbf{92.19} & 89.60 & 72.30
    & \textbf{32.57} & 10.14 & 78.05 & 40.25
    & 38.67 & 83.55 & 61.11
    & 57.89 \\

    Offline MPD
    & 55.52 & \textbf{53.13} & 91.80 & 90.40 & 72.71
    & 32.00 & 9.54 & 77.44 & 39.66
    & 39.33 & 82.62 & 60.98
    & 57.78 \\

    Lightning MOPD
    & 55.21 & 51.46 & 92.11 & \textbf{90.60} & 72.34
    & 31.93 & \textbf{10.98} & 74.39 & 39.10
    & 40.67 & \textbf{85.21} & 62.94
    & 58.13 \\

    Standard MOPD
    & \textbf{55.94} & 52.92 & 91.87 & 90.20 & \textbf{72.73}
    & 32.14 & 9.97 & 79.27 & 40.46
    & 39.67 & 81.89 & 60.78
    & 57.99 \\

    \midrule

    \name{}-SFT
    & 50.63 & 53.02 & 91.64 & 89.40 & 71.17
    & 32.21 & 10.73 & \textbf{81.71} & \textbf{41.55}
    & \textbf{43.00} & 84.10 & \textbf{63.55}
    & \textbf{58.76} \\

    \textbf{\name{}-MPD}
    & \makecell[r]{54.76\\{\scriptsize$\pm 1.49$}}
    & \makecell[r]{52.26\\{\scriptsize$\pm 0.94$}}
    & \makecell[r]{91.88\\{\scriptsize$\pm 0.16$}}
    & \makecell[r]{90.40\\{\scriptsize$\pm 0.20$}}
    & \makecell[r]{72.32\\{\scriptsize$\pm 0.37$}}
    & \makecell[r]{32.12\\{\scriptsize$\pm 0.64$}}
    & \makecell[r]{10.33\\{\scriptsize$\pm 0.27$}}
    & \makecell[r]{79.07\\{\scriptsize$\pm 1.53$}}
    & \makecell[r]{40.51\\{\scriptsize$\pm 0.64$}}
    & \makecell[r]{39.44\\{\scriptsize$\pm 2.36$}}
    & \makecell[r]{83.98\\{\scriptsize$\pm 0.47$}}
    & \makecell[r]{61.71\\{\scriptsize$\pm 1.07$}}
    & \makecell[r]{58.18\\{\scriptsize$\pm 0.50$}} \\

\midrule
\rowcolor{gray!12}
\multicolumn{14}{c}{\textit{\textbf{Qwen3-8B}}} \\
\midrule

Base
& 27.29 & 20.31 & 66.72 & 79.60 & 48.48
& 25.29 & 17.23 & 81.10 & 41.20
& 27.00 & 82.81 & 54.90
& 48.20 \\

Final Specialists
& 62.40 & 54.27 & 92.27 & 88.80 & 74.44
& 34.93 & 17.06 & 82.93 & 44.97
& 45.67 & 88.35 & 67.01
& 62.14 \\

Standard MOPD
& 59.17 & 51.04 & \textbf{91.33} & 87.40 & 72.23
& \textbf{36.07} & \textbf{17.99} & 82.93
& \textbf{45.66}
& \textbf{46.33} & \textbf{87.80} & \textbf{67.07}
& \textbf{61.65} \\

\midrule

\name{}-SFT
& 58.44 & 51.88 & 90.08 & 87.20 & 71.90
& 34.50 & \textbf{17.99} & 80.49 & 44.33
& 44.00 & 87.43 & 65.72
& 60.65 \\

\textbf{\name{}-MPD}
& \makecell[r]{\textbf{59.93}\\{\scriptsize$\pm 0.69$}}
& \makecell[r]{\textbf{52.15}\\{\scriptsize$\pm 0.22$}}
& \makecell[r]{91.12\\{\scriptsize$\pm 0.66$}}
& \makecell[r]{\textbf{88.27}\\{\scriptsize$\pm 0.61$}}
& \makecell[r]{\textbf{72.87}\\{\scriptsize$\pm 0.41$}}
& \makecell[r]{34.90\\{\scriptsize$\pm 0.48$}}
& \makecell[r]{16.53\\{\scriptsize$\pm 0.18$}}
& \makecell[r]{\textbf{83.54}\\{\scriptsize$\pm 2.79$}}
& \makecell[r]{44.99\\{\scriptsize$\pm 0.95$}}
& \makecell[r]{44.00\\{\scriptsize$\pm 0.58$}}
& \makecell[r]{87.49\\{\scriptsize$\pm 0.75$}}
& \makecell[r]{65.75\\{\scriptsize$\pm 0.08$}}
& \makecell[r]{61.20\\{\scriptsize$\pm 0.20$}} \\

\midrule

\rowcolor{gray!12}
\multicolumn{14}{c}{\textit{\textbf{SmolLM3-3B}}} \\
\midrule

Base
& 8.02 & 9.48 & 42.34 & 63.60 & 30.86
& 18.79 & 3.80 & 59.15 & 27.25
& 15.67 & 69.13 & 42.40
& 33.50 \\

Final Specialists
& 36.67 & 27.29 & 76.48 & 83.80 & 56.06
& 25.36 & 5.57 & 67.07 & 32.67
& 46.33 & 82.81 & 64.57
& 51.10 \\

Standard MOPD
& \textbf{37.29} & 26.88 & 73.44
& \textbf{83.20} & \textbf{55.20}
& \textbf{27.00} & \textbf{7.01} & 61.59 & 31.87
& \textbf{44.00} & \textbf{81.33} & \textbf{62.67}
& \textbf{49.91} \\

\midrule

\name{}-SFT
& 36.25 & \textbf{26.98} & 74.45 & 83.00 & 55.17
& 25.50 & 6.00 & 58.54 & 30.01
& 42.00 & \textbf{81.33} & 61.67
& 48.95 \\

\textbf{\name{}-MPD}
& \makecell[r]{36.39\\{\scriptsize$\pm 1.36$}}
& \makecell[r]{26.39\\{\scriptsize$\pm 0.77$}}
& \makecell[r]{\textbf{75.00}\\{\scriptsize$\pm 0.28$}}
& \makecell[r]{81.47\\{\scriptsize$\pm 0.64$}}
& \makecell[r]{54.81\\{\scriptsize$\pm 0.44$}}
& \makecell[r]{26.38\\{\scriptsize$\pm 0.37$}}
& \makecell[r]{6.17\\{\scriptsize$\pm 0.59$}}
& \makecell[r]{\textbf{63.21}\\{\scriptsize$\pm 2.14$}}
& \makecell[r]{\textbf{31.92}\\{\scriptsize$\pm 0.91$}}
& \makecell[r]{41.78\\{\scriptsize$\pm 0.38$}}
& \makecell[r]{81.08\\{\scriptsize$\pm 0.28$}}
& \makecell[r]{61.43\\{\scriptsize$\pm 0.22$}}
& \makecell[r]{49.39\\{\scriptsize$\pm 0.19$}} \\

\bottomrule
\end{tabular}%
}
\end{table*}

Table~\ref{tab:mopd_main} demonstrates that exploration trajectories
support effective consolidation without additional response generation.
On Qwen3-4B, \name{}-MPD achieves $58.18$ overall accuracy,
slightly above Offline MPD ($57.78$) and Lightning MOPD ($58.13$).
The exploration history spans multiple stages
of capability development, potentially providing broader coverage
than either initial-student or final-specialist responses alone.
Its comparable performance to Standard MOPD ($57.99$) also shows that \textit{reuse} approximates \textit{fresh}-based updates.
The additional models exhibit a similar accuracy-efficiency trade-off.
\name{}-MPD reaches $61.20\pm0.20$ on Qwen3-8B and
$49.39\pm0.19$ on SmolLM3-3B, with modest deficits of $0.45$
and $0.52$ percentage points relative to Standard MOPD.
Together with the cost reductions in Section~\ref{subsec:eff},
these results support reuse as a practical alternative to
on-policy consolidation.
Within \name{} variants, cached teacher supervision generally yields
more balanced results than response-token training alone.
SFT attains the highest overall score on Qwen3-4B
($58.76$), as its stronger code and instruction-following results
offset lower math accuracy, notably on AIME24
($50.63$ vs.\ MPD's $54.76$).
MPD further improves overall accuracy over SFT by $0.55$ and
$0.44$ points on Qwen3-8B and SmolLM3-3B, respectively,
highlighting the value of teacher supervision beyond the reused responses themselves.

\subsection{Trajectory Analysis}

\begin{figure*}
    \centering
    \includegraphics[width=\linewidth]{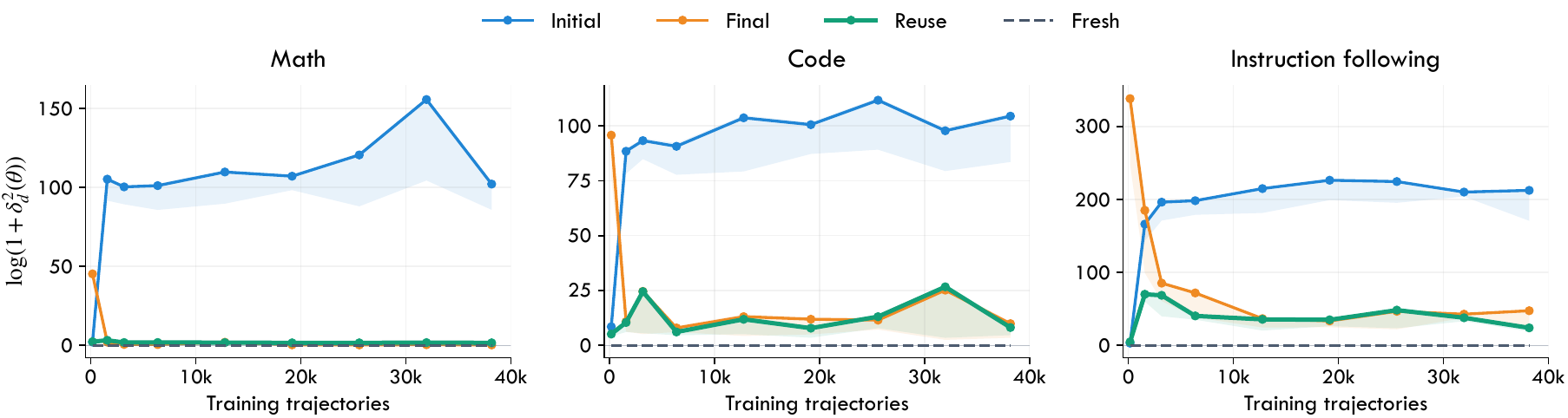}
    \caption{Trajectory-source coverage during Qwen3-4B consolidation. Lower values indicate less distribution mismatch or better coverage.}
    \label{fig:coverage}
\end{figure*}

\textbf{Trajectory coverage comparison.}
To examine how trajectory sources cover the evolving student,
we evaluate their likelihoods on common on-policy responses sampled
throughout consolidation on Qwen3-4B.
We approximate exploration history with a uniform mixture of
policy checkpoints spaced across domain RL, including the shared
initialization and final specialists.
Following the mismatch definition in Eq.~\ref{eq:reuse_mismatch},
we estimate its response-prefix counterpart and report
$\log(1+\delta_d^2(\theta))$.
Figure~\ref{fig:coverage} shows that the initial policy
quickly becomes mismatched as the student evolves, whereas final
specialists exhibit substantial early mismatch.
The historical mixture (\textit{reuse}) moderates these
stage-dependent mismatches, consistent with complementary coverage
across exploration stages, though it does not uniformly outperform
the final specialists.
These diagnostics illustrate coverage dynamics rather than
directly validate the full-trajectory update bound.

\begin{wraptable}{r}{0.4\textwidth}
\centering
\small
\caption{Trajectory-selection ablation.}
\label{tab:selection_summary}
\resizebox{\linewidth}{!}{%
\begin{tabular}{@{}lrrrr@{}}
\toprule
Selector & Math & Code & IF & AVG \\
\midrule
Random     & 71.21 & 40.40 & 61.28 & 57.63 \\
Reward & 71.30 & 41.30 & 60.61 & 57.74 \\
Balance & 72.65 & 41.14 & 62.77 & 58.85 \\
\bottomrule
\end{tabular}%
}
\end{wraptable}

\textbf{Sensitivity to trajectory selection.}
To assess how strongly reuse depends on trajectory selection,
we compare three strategies on Qwen3-4B under MPD settings.
Each retains one of eight responses per prompt: \textit{Random} samples uniformly, \textit{Reward} selects the highest-reward response, and \textit{Balance} samples uniformly from the top four by reward.
Table~\ref{tab:selection_summary} shows relatively modest differences in overall accuracy, with Random already approaching Standard MOPD ($57.63$ vs.\ $57.99$). 
Reward improves only slightly over Random, while Balance achieves the highest overall score in this comparison.
These results suggest that effective reuse does not hinge on careful trajectory selection, while leaving room for improved selectors that better support capability transfer (Appendix~\ref{app:selection_results}).

%% file: Sections/05_relat.tex
\section{Related Work}
\label{sec:related}

\paragraph{Multi-task post-training.}
Common pipelines include sequential or mixed-domain training with
SFT or RL objectives~\citep{luo2023forgetting,lambert2025tulu3,
wang2025nemotroncascade} and training-free model
merging~\citep{yadav2023ties}.
To mitigate conflicts among task objectives, PCGrad~\citep{yu2020pcgrad}
and MGS~\citep{mgs} apply gradient surgery during joint training.
Recent studies further refine training: NPO~\citep{qin2026npo}
uses near-future policy guidance to improve exploration,
CoPD~\citep{copd} interleaves RL and mutual distillation,
D$^3$-MOPD~\citep{sun2026d3mopd} adapts domain sampling to learning
progress during consolidation, and Open-MOPD~\citep{openmopd}
rebalances domain supervision to address capability imbalance.
These refinements are complementary to \name{}: they modify how
consolidation learns, whereas \name{} changes the source of
trajectories and teacher supervision.
% We focus on RL-then-MOPD, where independently trained specialists
% supervise a shared student using fresh rollouts~\citep{deepseekv4,mopd}.
% Unlike existing approaches that leave domain-RL trajectories unused,
% \name{} reuses both the specialist policies and historical trajectories
% produced during domain-wise exploration.

\paragraph{Efficient policy distillation and trajectory reuse.}
Offline sequence-level distillation learns from teacher-generated
responses~\citep{seqkd}, while Lightning OPD~\citep{wu2026lightningopd}
uses fixed responses from the SFT-initialized student.
These sources correspond to the two endpoints of domain exploration:
the final specialist policy and the initial policy.
AsyncOPD~\citep{asyncopd} enables asynchronous distillation using
stale student rollouts.
At finer granularity, ReOPD~\citep{reopd} reuses teacher-generated
prefixes while generating new student actions at selected steps.
With fixed responses and frozen teachers, both endpoint sources
and historical reuse permit offline pre-scoring for
online-teacher-free student training.
\name{} additionally eliminates response generation for consolidation
by reusing trajectories already collected during domain RL.
By combining complementary behaviors across exploration substages,
its historical policy mixture may cover the evolving student better
than either endpoint alone.

%% file: Sections/06_conclu.tex
\section{Conclusion}
\label{sec:conclu}

Conventional RL-then-MOPD incurs substantial system costs from
redundant response generation and persistent teacher GPU residency.
We formulate this pipeline as domain-wise exploration followed by
shared exploitation and introduce \name{} to reuse both outputs of
exploration: final specialist policies and historical trajectories.
By combining trajectory reuse with teacher pre-scoring, \name{}
enables rollout-free consolidation and online-teacher-free student
training.
Our analysis relates the discrepancy from on-policy distillation
updates to distribution mismatch and characterizes how historical
policy mixtures can improve coverage.
Experiments across three models show substantial reductions in GPU-hours and training time, while lowering peak GPU demand and maintaining comparable multi-domain performance.

%% file: Sections/08_implementation_appendix.tex
\section{Implementation Details}
\label{app:training_details}

\subsection{Student Objective}
\label{app:student_objective}
For each mixed-domain minibatch, let $\mathcal C_d$ be the subset
sampled from $\widetilde{\mathcal B}_d$. For each observed token,
define
\begin{align}
    A_t(\tau)&=\operatorname{sg}\left[
    \operatorname{clip}\bigl(q_{d,t}^T(\tau)
    -\log\pi_\theta(y_t\mid h_t),-c,c\bigr)\right],
    \label{eq:distillation_advantage}\\
    \rho_t(\tau)&=
    \frac{\pi_\theta(y_t\mid h_t)}{\bar\pi(y_t\mid h_t)},
    \label{eq:student_ratio}
\end{align}
where $\operatorname{sg}$ denotes stop-gradient, $c>0$ bounds the
distillation signal, and $\bar\pi$ is the student checkpoint anchoring
the minibatch updates. We optimize
\begin{equation}
    \mathcal L_{\mathrm{PD}}(\theta)=
    -\frac{1}{N_{\mathcal C}}\sum_d
    \sum_{(\tau,\mathbf q^T)\in\mathcal C_d}\sum_t
    \min\!\left\{\rho_t A_t,
    \operatorname{clip}(\rho_t,1-\epsilon,1+\epsilon)A_t\right\},
    \label{eq:shared_distillation}
\end{equation}
where $N_{\mathcal C}$ is the total number of valid response tokens
in the combined minibatch and $\epsilon>0$ controls ratio clipping.
Student and anchor log-probabilities are evaluated on the reused
responses without generating new ones.

\subsection{Training Configurations}
\label{app:consolidation_protocol}

\paragraph{Models and initialization.}
We use the released post-trained checkpoints of
Qwen3-4B\footnote{\url{https://huggingface.co/Qwen/Qwen3-4B}},
Qwen3-8B\footnote{\url{https://huggingface.co/Qwen/Qwen3-8B}}, and
SmolLM3-3B\footnote{\url{https://huggingface.co/HuggingFaceTB/SmolLM3-3B}},
all in non-thinking mode. Qwen3-4B hosts the full baseline comparison;
the other models test generalization across sizes and families.
Domain specialists and the shared student are independently
initialized from the same checkpoint, denoted \emph{Base} in the
result tables. PD methods use the fixed final specialists as teachers.

\paragraph{Implementation and evaluation.}
All methods are implemented in verl with vLLM and run on two nodes
with eight NVIDIA H100 GPUs each; the efficiency tables report
per-method resource allocations. We evaluate final student
checkpoints.

\paragraph{Random seeds.}
The default training seed is 42. The \name{}-MPD accuracy rows in
Table~\ref{tab:mopd_main} report the mean and sample standard
deviation over seeds 42, 43, and 44, with RL trajectories and
final-teacher scores held fixed across seeds.
Other rows report single-seed results unless noted otherwise. Each seed's domain and
aggregate scores are computed before averaging across seeds.

\paragraph{Baseline construction.}
Lightning MOPD adapts the fixed initial-student trajectory source of
Lightning OPD to multi-domain PD; it does not reproduce the full
SFT-OPD pipeline. Offline MPD, Lightning MOPD, and \name{}-MPD cache
final-specialist scores before student training. Mixed SFT and
\name{}-SFT use final-specialist response tokens and historical RL
response tokens, respectively, without teacher scoring during training.

\paragraph{Trajectory selection.}
\textit{Random} samples one of the eight RL responses per prompt
occurrence; \textit{Reward} selects the response with the highest
recorded RL reward; \textit{Balance} samples uniformly among
the four highest-reward responses.
Table~\ref{tab:mopd_main} uses \textit{Balance} on Qwen3-4B and
\textit{Random} on Qwen3-8B and SmolLM3-3B for both \name{} variants.

\paragraph{Training configurations.}
Tables~\ref{tab:rl_training_config} and~\ref{tab:opd_training_config}
summarize the key RL and distillation settings.
Both stages use 64 prompts per batch, a learning rate of
$1\times10^{-6}$, a PPO mini-batch of 16, sampling at temperature
$1.0$ with top-$p=1$, a maximum response length of 10,240 tokens,
token-mean loss aggregation, and seed 42.
Each domain uses 17K training examples from
DAPO-Math-17k~\citep{dapo2025},
DeepCoder-Preview-Dataset~\citep{deepcoder2025}, or
Nemotron-RL-instruction\_following~\citep{nvidia2025nemotronif}.
Math uses a final-answer verifier; code uses an all-tests-pass reward
with at most 15 tests, a 10-second timeout, and a 1,024 MB memory
limit; instruction following uses the fraction of verified
constraints satisfied.

\begin{table}[H]
\centering
\small
\begin{minipage}[t]{0.48\linewidth}
\centering
\caption{Domain-RL settings.}
\label{tab:rl_training_config}
\setlength{\tabcolsep}{4pt}
\begin{tabular}{@{}ll@{}}
\toprule
Setting & Value \\
\midrule
Objective & GRPO \\
Training batches & 200 \\
Responses per prompt & 8 \\
PPO clipping (low / high) & 0.2 / 0.28 \\
KL reward & Disabled \\
Overlong buffer & 4,096 tokens \\
\bottomrule
\end{tabular}
\end{minipage}\hfill
\begin{minipage}[t]{0.48\linewidth}
\centering
\caption{Policy-distillation settings.}
\label{tab:opd_training_config}
\setlength{\tabcolsep}{4pt}
\begin{tabular}{@{}ll@{}}
\toprule
Setting & Value \\
\midrule
Objective & PPO-style clipped PD \\
Responses per prompt & 1 \\
Prompt proportions & $1{:}1{:}1$ \\
Teacher scoring & Temperature $=1.0$ \\
Distillation signal & \makecell{Teacher$-$student\\log-prob.\ difference} \\
\bottomrule
\end{tabular}
\end{minipage}
\end{table}

\subsection{Evaluation Configurations}
\label{app:benchmark_protocol}

For math, we report mean@32 (averaging correctness over 32 responses
per problem) on AIME 2024/2025 and AMC 2023, and single-response
accuracy on MATH500~\citep{hendrycks2021math,lightman2023verify}.
For code, we report pass@1 with eight samples per problem on the
175-problem LiveCodeBench v6~\citep{jain2024livecodebench}
\texttt{test6} generation subset and the
148-problem BigCodeBench~\citep{zhuo2024bigcodebench} Complete-Hard
subset, and pass@1 with a single sample on HumanEval+~\citep{liu2023evalplus}.
IFBench~\citep{pyatkin2025ifbench} and IFEval~\citep{zhou2023ifeval}
use strict prompt-level accuracy.
Math evaluation uses temperature $0.6$ and top-$p=0.95$; code
and instruction-following evaluation use temperature $0.7$,
top-$p=0.8$, top-$k=20$, and seed 1234, with an additional presence
penalty of $1.5$ for code.

%% file: Sections/09_theory_appendix.tex
\section{Theoretical Details}
\label{app:reuse_theory}

We fix the student state $\theta$, final teachers, and historical
policies. Write $P_d(x,y)=\mathcal D_d(x)p_{\mathrm{fresh},d}(y\mid x;\theta)$
and $Q_d(x,y)=\mathcal D_d(x)p_{\mathrm{reuse},d}(y\mid x)$.
The common prompt marginal gives
$\chi^2(P_d\|Q_d)=\delta_d^2(\theta)$, where
\begin{equation}
    \chi^2(P\|Q)=
    \mathbb E_Q\left[\left(\frac{\mathrm dP}{\mathrm dQ}-1\right)^2\right],
    \qquad P\ll Q.
    \label{eq:chi_square_definition}
\end{equation}
These are population distributions, rather than realized finite buffers.

\subsection{Proof of Theorem~\ref{thm:reuse_discrepancy}}
\label{app:discrepancy_proof}

\begin{proof}
Let $W_d=\mathrm dP_d/\mathrm dQ_d$ and
$g_{b,d}=\mathbb E_{\mathcal D_d,p_{b,d}}f_d$.
Changing measure and applying Cauchy-Schwarz yields
\begin{align}
    g_{\mathrm{fresh},d}-g_{\mathrm{reuse},d}
    &=\mathbb E_{Q_d}[(W_d-1)f_d],\\
    \|g_{\mathrm{fresh},d}-g_{\mathrm{reuse},d}\|_2
    &\le
    \sqrt{\mathbb E_{Q_d}(W_d-1)^2}
    \sqrt{\mathbb E_{Q_d}\|f_d\|_2^2}
    \le\delta_d B_d.
\end{align}
The triangle inequality over the domain-weighted sum gives
Eq.~\ref{eq:reuse_discrepancy_bound}.
\end{proof}

Absolute continuity holds if a retained policy with positive mixture
weight assigns positive probability to every response allowed by the
fresh source. Hard sampling filters can violate this condition.
An admissible moment bound is $B_d=cL_{\max}G$ when response lengths
are bounded by $L_{\max}$ and
$\|\nabla_\theta\log\pi_\theta(y_t\mid h_t)\|_2\le G$.
Clipping the advantage alone does not bound the score function or
distribution mismatch. The ratio $W_d$ is a proof device, not an
importance weight used by the algorithm.

\subsection{Proof of Proposition~\ref{prop:history_coverage}}
\label{app:coverage_proof}

\begin{proof}
Fix $d,x$ and a retained component with $\alpha_{d,i}>0$.
Abbreviate $p=\pi_\theta(\cdot\mid x)$,
$q=p_{\mathrm{reuse},d}(\cdot\mid x)$, and
$q_i=\pi_{d,i}(\cdot\mid x)$.
Since $q\ge\alpha_{d,i}q_i$ pointwise,
\begin{equation}
    1+\chi^2(p\|q)
    =\sum_y\frac{p(y)^2}{q(y)}
    \le\frac{1}{\alpha_{d,i}}\sum_y\frac{p(y)^2}{q_i(y)}
    =\frac{1+\chi^2(p\|q_i)}{\alpha_{d,i}}.
\end{equation}
Zero-over-zero terms are zero; if the right-hand divergence is
infinite, the bound is vacuous. Averaging over
$x\sim\mathcal D_d$ proves the proposition.
\end{proof}

If initial-policy responses are retained with weight $\alpha_{d,0}>0$,
the bound at $\pi_{\theta_0}=\pi_0$ becomes
$\delta_d^2(\theta_0)\le\alpha_{d,0}^{-1}-1$.
This gives an initial coverage component, not control over later drift.

\subsection{An Example of Complementary Historical Coverage}
\label{app:coverage_example}

For one prompt and $0<\varepsilon<1/3$, consider three responses:
\begin{center}
\begin{tabular}{lccc}
\toprule
Distribution & $A$ & $B$ & $C$ \\
\midrule
Initial & $1-2\varepsilon$ & $\varepsilon$ & $\varepsilon$ \\
Final & $\varepsilon$ & $1-2\varepsilon$ & $\varepsilon$ \\
Intermediate & $\varepsilon$ & $\varepsilon$ & $1-2\varepsilon$ \\
Current student & $1/3$ & $1/3$ & $1/3$ \\
\bottomrule
\end{tabular}
\end{center}
The uniform historical mixture equals the current student $p$, so
$\chi^2(p\|q_{\mathrm{reuse}})=0$.
Each endpoint instead has mismatch
\[
    \chi^2(p\|q_{\mathrm{initial}})
    =\chi^2(p\|q_{\mathrm{final}})
    =\frac{1}{9}\left(\frac{1}{1-2\varepsilon}
    +\frac{2}{\varepsilon}\right)-1>0.
\]
Even the best endpoint-only mixture retains positive mismatch:
\[
    \min_{\lambda\in[0,1]}
    \chi^2\!\left(p\middle\|
    \lambda q_{\mathrm{initial}}+(1-\lambda)q_{\mathrm{final}}\right)
    =\frac{(1-3\varepsilon)^2}{9\varepsilon(1-\varepsilon)}>0.
\]
Both endpoints underrepresent $C$; the intermediate policy fills
this gap. Conversely, an endpoint has zero mismatch when the student
equals that endpoint. The example establishes a possible benefit,
not a universal ranking or an assumption that students follow
historical mixtures.

\subsection{Connection to the Implemented Update}
\label{app:implementation_connection}

\paragraph{PPO anchor and inner updates.}
The implemented objective is Eq.~\ref{eq:shared_distillation}.
At $\pi_\theta=\bar\pi$, all token ratios equal one, ratio clipping
is inactive, and the negative gradient is the sum of the
$f_d$ contributions divided by the valid token count.
With the sampling distribution and detached advantages held fixed,
$g_b$ is also the gradient at $\vartheta=\theta$ of
\[
    \mathcal S_b(\vartheta;\theta)
    =\sum_d w_d\mathbb E_{\mathcal D_d,p_{b,d}}
    \left[\sum_t A_{d,t}(\theta)
    \log\pi_\vartheta(y_t\mid h_t)\right].
\]
Thus the comparison concerns raw local surrogate gradients, not
task-reward gradients or optimizer-preconditioned steps.
Away from the anchor, token contributions acquire the PPO ratio
and clipping indicator. The same change-of-measure argument applies
to that common contribution function if its second moment is finite.
Note that a fresh minibatch is still sampled from the anchor
$\bar\pi$ rather than the updated $\pi_\theta$; the anchor ratio
therefore corrects for student updates within the minibatch window,
not for the mismatch between the student and the historical
explorer that generated a reused response.

\paragraph{Global token normalization.}
Let $P_b(d,x,y)=w_d\mathcal D_d(x)p_{b,d}(y\mid x)$,
$\ell=|y|$, and $Z_b=\mathbb E_{P_b}\ell\in(0,\infty)$.
Population token-mean updates are $G_b=g_b/Z_b$.
For $W=\mathrm dP_{\mathrm{fresh}}/\mathrm dP_{\mathrm{reuse}}$
and $f(d,x,y)=f_d(x,y)$,
\begin{equation}
    G_{\mathrm{fresh}}-G_{\mathrm{reuse}}
    =\frac{\mathbb E_{P_{\mathrm{reuse}}}
    [(W-1)(f-\ell G_{\mathrm{reuse}})]}{Z_{\mathrm{fresh}}}.
    \label{eq:normalized_update_difference}
\end{equation}
This follows from
$\mathbb E_{P_{\mathrm{reuse}}}(f-\ell G_{\mathrm{reuse}})=0$.
Cauchy-Schwarz then gives
\[
    \|G_{\mathrm{fresh}}-G_{\mathrm{reuse}}\|_2
    \le
    \frac{\sqrt{\sum_d w_d\delta_d^2}\,
    \sqrt{\mathbb E_{P_{\mathrm{reuse}}}
    \|f-\ell G_{\mathrm{reuse}}\|_2^2}}{Z_{\mathrm{fresh}}},
\]
when these moments are finite. Different response lengths therefore
affect normalized update magnitudes. A finite minibatch has a random
token-count denominator; its expected update need not equal the
population ratio above.

\paragraph{Finite buffers and selection.}
\label{app:reuse_scope}
If $\widehat g_{\mathrm{reuse}}$ is the expected unnormalized update
under sampling from a realized buffer, then
\[
    \|\widehat g_{\mathrm{reuse}}-g_{\mathrm{fresh}}\|_2
    \le\Delta(\theta)
    +\|\widehat g_{\mathrm{reuse}}-g_{\mathrm{reuse}}\|_2.
\]
The additional finite-buffer error, minibatch variation, and adaptive
student evolution are not bounded by the population theorem.
Content-dependent selection requires using the resulting replay
distribution; differing prompt marginals require joint
prompt-response distributions. The results describe local update
discrepancy, not convergence or final task accuracy.

%% file: Sections/10_results_appendix.tex
\section{Training Dynamics and Additional Results}
\label{app:additional_results}

\subsection{Training Dynamics}
\label{app:dynamics}

\begin{figure}[htbp]
    \centering
    \begin{subfigure}{\linewidth}
        \centering
        \includegraphics[width=\linewidth]{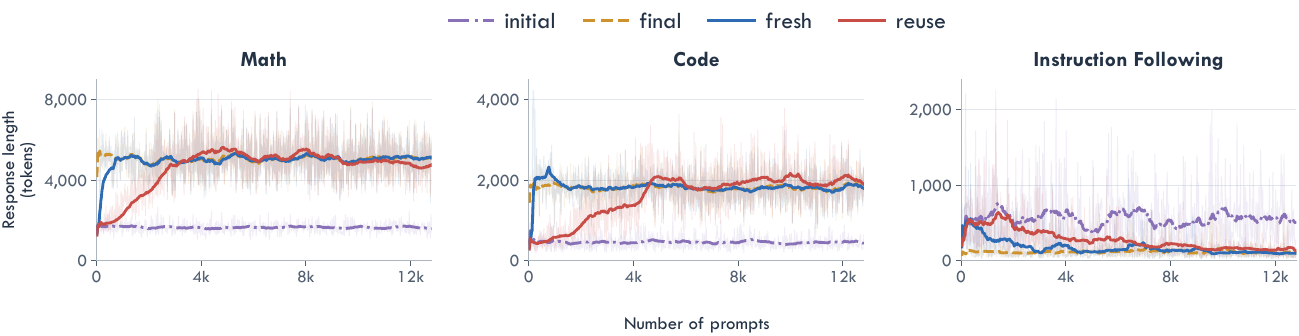}
        \caption{Response length by trajectory source.
        Mixed SFT and Offline MPD share the same \textit{final}
        responses.}
        \label{fig:response_length_dynamics}
    \end{subfigure}

    \begin{subfigure}{\linewidth}
        \centering
        \includegraphics[width=0.96\linewidth]{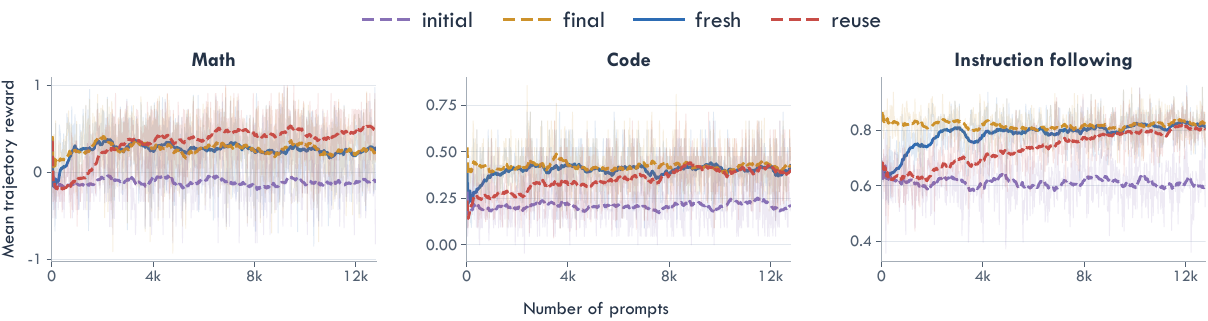}
        \caption{Reward by trajectory source.}
        \label{fig:trajectory_reward}
    \end{subfigure}

    \begin{subfigure}{\linewidth}
        \centering
        \includegraphics[width=0.93\linewidth]{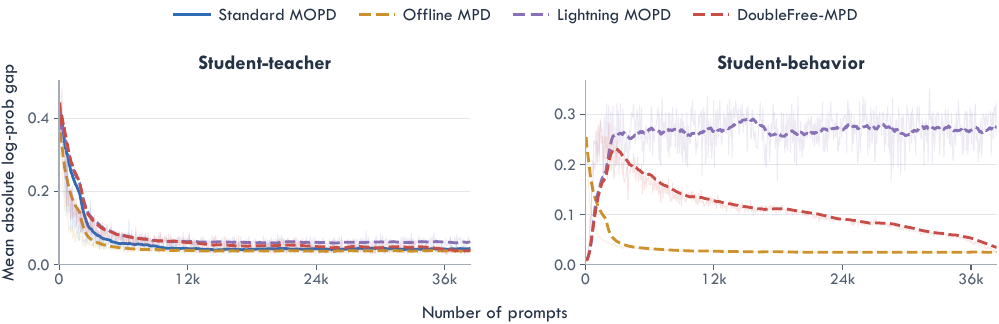}
        \caption{Student-teacher and student-behavior absolute
        sampled-token log-probability gaps.
        Standard MOPD is omitted from the behavior gap, as its
        responses are generated by the pre-update student itself.}
        \label{fig:trajectory_logprob_gaps}
    \end{subfigure}
    \caption{Training dynamics by trajectory source on Qwen3-4B.}
    \label{fig:training_dynamics}
\end{figure}

Figure~\ref{fig:training_dynamics} compares response sources
during Qwen3-4B consolidation.
Figures~\ref{fig:response_length_dynamics}
and~\ref{fig:trajectory_reward} analyze the trajectories themselves
through their lengths and rewards, while
Figure~\ref{fig:trajectory_logprob_gaps} tracks training dynamics
through the student's log-probability gaps.
Throughout, \textit{fresh} denotes current-student
responses from Standard MOPD, \textit{reuse} the historical RL
responses consumed by \name{}-MPD, \textit{initial} the
initial-student responses from Lightning MOPD, and \textit{final}
the final-specialist responses shared by Mixed SFT and Offline MPD.
Cached-source curves follow the consumption order of fixed responses;
\textit{fresh} instead follows the evolving student's online responses.
The \textit{reuse} trace uses \textit{Random} selection to
exclude reward-based selection effects.

The \textit{final} responses average $5{,}061$, $1{,}820$, and
$120$ tokens in math, code, and instruction following, respectively,
versus \textit{initial}'s $1{,}660$, $464$, and $556$.
The longer math and code responses dominate the total token volume,
so generating from final specialists is correspondingly more expensive.
Lightning MOPD instead generates from the weaker initial student,
whose responses are substantially shorter, so its additional
generation is correspondingly faster.
\name{} reuses responses already produced during domain RL and
incurs no additional preparation generation.
Weighting the distillation loss to balance the per-domain
token-count differences observed above is a natural direction for
future work.

Figure~\ref{fig:trajectory_reward} shows that \textit{reuse}
responses attain higher mean reward than \textit{initial} in all
three domains, while remaining below \textit{fresh} and
\textit{final} in code and instruction following.
Since \textit{reuse} mixes responses from policies across RL
substages, it also naturally spans a wider reward range than any
single-policy source, providing more diverse responses for the
student to learn from.

In Figure~\ref{fig:trajectory_logprob_gaps}, \name{}-MPD's
student-teacher gap steadily decreases and approaches Offline
MPD's late-stage level, indicating that reused trajectories move
the student toward the teachers about as effectively as
teacher-generated responses do.
The teacher gap is logged during actor updates, and the behavior
gap compares the pre-update student with the policy that generated
each cached response.
Each method is logged on its own responses.
Meanwhile, its student-behavior gap also
\emph{decreases} as training consumes later RL trajectories,
rather than growing as a fixed offline source would suggest.
The consumption order explains this: consolidation replays the
exploration progression from early, initial-like policies toward
the final specialist, so the behavior distribution advances in the
same direction as the student itself.

\subsection{Sensitivity to Trajectory Selection}
\label{app:selection_results}

We ablate the trajectory selector $\phi$ in
Eq.~\ref{eq:support_selection} for both \name{} variants on
Qwen3-4B; each selector retains one of eight RL responses per
prompt occurrence.
Table~\ref{tab:reuse_sft_selection_ablation} shows that overall
accuracy varies modestly across selectors: \textit{Random} already
approaches the reference method in each block, \textit{Reward}
brings small gains, and \textit{Balance} performs best for both
variants.
Effective reuse therefore does not hinge on careful response
selection, while better selectors still leave room for further
gains.

\begin{table*}[htbp]
\centering
\caption{Trajectory-selection ablations for \name{}-SFT and
\name{}-MPD on Qwen3-4B (\%).
Bold indicates the best score in each column within each block.}
\label{tab:reuse_sft_selection_ablation}

\footnotesize
\setlength{\tabcolsep}{3.5pt}
\renewcommand{\arraystretch}{1.15}

\resizebox{\linewidth}{!}{%
\begin{tabular}{@{}l*{13}{r}@{}}
    \toprule
    \multirow{2}{*}{\textbf{Method}}
    & \multicolumn{5}{c}{\textbf{Math}}
    & \multicolumn{4}{c}{\textbf{Code}}
    & \multicolumn{3}{c}{\textbf{Instruction Following}}
    & \multirow{2}{*}{\shortstack{\textbf{All}\\\textbf{AVG}}} \\
    \cmidrule(lr){2-6}
    \cmidrule(lr){7-10}
    \cmidrule(lr){11-13}
    & \shortstack{AIME24\\mean@32}
    & \shortstack{AIME25\\mean@32}
    & \shortstack{AMC23\\mean@32}
    & MATH500
    & \textbf{AVG}
    & \shortstack{LCB v6\\pass@1}
    & \shortstack{BCB\\pass@1}
    & \shortstack{HumanEval+\\pass@1}
    & \textbf{AVG}
    & IFBench
    & IFEval
    & \textbf{AVG}
    & \\
    \midrule

    \rowcolor{gray!12}
    \multicolumn{14}{c}{\textit{\textbf{\name{}-SFT}}} \\
    \midrule

    Mixed SFT
    & \textbf{54.69} & 52.71 & \textbf{92.19}
    & 89.60 & \textbf{72.30}
    & \textbf{32.57} & 10.14 & 78.05 & 40.25
    & 38.67 & 83.55 & 61.11
    & 57.89 \\

    \midrule

    Random
    & 52.71 & 51.04 & 90.39 & \textbf{90.40} & 71.14
    & 31.93 & \textbf{10.90} & 80.49 & 41.10
    & 39.67 & 83.92 & 61.79
    & 58.01 \\

    Reward
    & 53.44 & \textbf{53.33} & 91.88 & 88.60 & 71.81
    & 31.50 & 10.56 & \textbf{81.71} & 41.25
    & 41.33 & 83.92 & 62.63
    & 58.56 \\

    Balance
    & 50.63 & 53.02 & 91.64 & 89.40 & 71.17
    & 32.21 & 10.73 & \textbf{81.71} & \textbf{41.55}
    & \textbf{43.00} & \textbf{84.10} & \textbf{63.55}
    & \textbf{58.76} \\

    \midrule

    \rowcolor{gray!12}
    \multicolumn{14}{c}{\textit{\textbf{\name{}-MPD}}} \\
    \midrule

    Standard MOPD
    & 55.94 & \textbf{52.92} & 91.87 & \textbf{90.20}
    & \textbf{72.73}
    & 32.14 & 9.97 & 79.27 & 40.46
    & 39.67 & 81.89 & 60.78
    & 57.99 \\

    \midrule

    Random
    & 53.85 & 50.31 & \textbf{91.88} & 88.80 & 71.21
    & 32.07 & 10.47 & 78.66 & 40.40
    & 39.00 & \textbf{83.55} & 61.28
    & 57.63 \\

    Reward
    & 54.38 & 50.21 & 91.41 & 89.20 & 71.30
    & 32.00 & \textbf{10.81} & \textbf{81.10} & \textbf{41.30}
    & 37.67 & \textbf{83.55} & 60.61
    & 57.74 \\

    Balance
    & \textbf{56.04} & 52.50 & \textbf{91.88}
    & \textbf{90.20} & 72.65
    & \textbf{32.79} & 10.14 & 80.49 & 41.14
    & \textbf{42.00} & \textbf{83.55} & \textbf{62.77}
    & \textbf{58.85} \\

    \bottomrule
\end{tabular}%
}

\end{table*}